\documentclass[letterpaper, 10 pt, conference]{ieeeconf}
\usepackage{amsmath,amsfonts}
\usepackage{algorithmic}
\usepackage{algorithm}
\usepackage{array}
\usepackage[caption=false,font=normalsize,labelfont=sf,textfont=sf]{subfig}
\usepackage{booktabs}
\usepackage{textcomp}
\usepackage{stfloats}
\usepackage{url}
\usepackage{verbatim}
\usepackage{graphicx}
\usepackage{cite}
\newcommand{\cH}{\mathcal{H}}
\newcommand{\MMD}{\text{MMD}}

\newtheorem{theorem}{Theorem}
\newtheorem{definition}{Definition}
\newtheorem{prop}{Proposition}
\newtheorem{lemma}{Lemma}

\begin{document}

\title{\LARGE \bf
Infinite-Horizon Ergodic Control via Kernel Mean Embeddings
}
\author{Christian Hughes and Ian Abraham}
\maketitle
\begin{abstract}
    This paper derives an infinite-horizon ergodic controller based on kernel mean embeddings for long-duration coverage tasks on general domains. 
    While existing kernel-based ergodic control methods provide strong coverage guarantees on general coverage domains, their practical use has been limited to sub-ergodic, finite-time horizons due to intractable computational scaling, prohibiting its use for long-duration coverage. 
    We resolve this scaling by deriving an infinite-horizon ergodic controller equipped with an extended kernel mean embedding error visitation state that recursively records state visitation. 
    This extended state decouples past visitation from future control synthesis and expands ergodic control to infinite-time settings. 
    In addition, we present a variation of the controller that operates on a receding-horizon control formulation with the extended error state. 
    We demonstrate theoretical proof of asymptotic convergence of the derived controller and show preservation of ergodic coverage guarantees for a class of 2D and 3D coverage problems. 
\end{abstract}


\section{Introduction}

    Modern autonomous systems are tasked with long-duration coverage and exploration in remote and complex environments. 
    In problem domains like underwater monitoring or planetary exploration, agents must distribute their time across environments in proportion to the utility, or importance, of each area. 
    Proportional coverage ensures that high-priority areas receive frequent attention while maintaining complete visitation of the entire coverage domain.
    
    Ergodic control methods provide a robust approach to proportional coverage tasks by minimizing the discrepancy between how often an agent visits an area (a time-average distribution) and a target spatial distribution \cite{mathew2011}. 
    Classical ergodic control methods optimize spectral ergodic metrics to produce proportional coverage trajectories in continuous-space environments, but these methods are generally restricted to bounded Euclidean domains using well-defined Fourier basis functions \cite{lee2024, sartoretti2021}. 
    Recent work has shown that kernel-based metrics like maximum mean discrepancy (MMD) successfully metrize weak convergence between trajectory and target measures \cite{simon2023} and enable ergodic path planning on non-Euclidean manifolds via a Reproducing Kernel Hilbert Space \cite{hughes2025, sun2024fast}.

    Despite their generality to arbitrary domains, kernel-based ergodic metrics suffer from a fundamental computational bottleneck over long-time horizons. 
    In order to ensure ergodicity in long time-horizons, the control objective must account for the agent's entire visitation history. 
    Visitation recording is commonly treated as a cumulative sampling problem where history is represented as a growing list of discrete state observations \cite{miller2016} while others formulate a recursive approximation of the spectral ergodic metric~\cite{mavrommati2018real, de_la_torre_2016} that facilitates infinite-horizon control. 
    The goal in this work is to derive an equivalent formulation for the kernel mean embedding formulation of ergodic control~\cite{hughes2025, sun2024fast} for the infinite-horizon setting.

    \begin{figure}[t!]
        \centering
        \vspace{5pt} 
        \includegraphics[width=\linewidth]{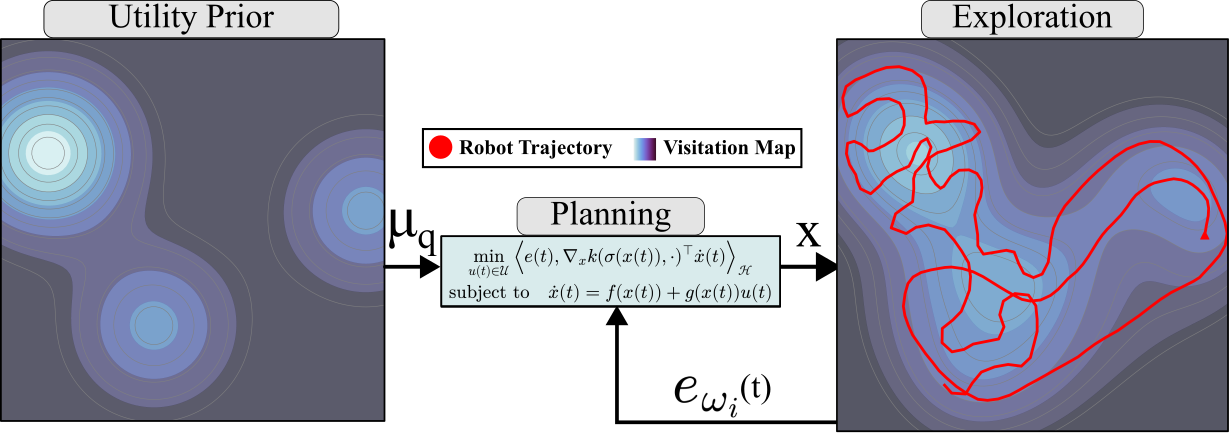}
        \vspace{-20pt}
        \caption{\textbf{Infinite-Horizon Ergodic Control Feedback Loop.} The desired target distribution is embedded into a kernel mean embedding function $\mu_q$ that is tracked. At each time-step, the agent computes an ergodic control based on its current state $x(t)$ and its visitation embedding $\mu_{\rho_{x,t}}$ according to its time-average visitation distribution $\rho_{x,t}$.
        The ergodic controller then seeks to minimize the distance between these embeddings $e_{\omega_i}(t)$ at distinct points in the domain $\omega_i$ as $t\to\infty$, inducing ergodicity.}
        \label{fig:kme_diagram}
        \vspace{-10pt}
    \end{figure}

    To achieve our goal, we derive an infinite-horizon ergodic controller based on kernel mean embeddings (KME) to represent visitation history. 
    We show that the derived infinite-horizon ergodic controller induces an error visitation state that records past visitations in a single embedding which decouples past states from future control synthesis.
    In addition, we introduce a recursive model-predictive control variation that produces infinite-horizon ergodic controls that converge more quickly towards ergodicity. 
    This approach ensures the controller can asymptotically achieve ergodic coverage (and thus ergodicity) over long mission durations.
    We demonstrate our approach on a number of coverage problems in 2D and 3D, and provide analysis on the impact of receding-horizon control as well as kernel selection.  
    In summary, the primary contributions of this work are:
    \begin{itemize}
        \item Derivation of an infinite-horizon ergodic controller based on kernel mean embeddings.
        \item Introduction of an extended error visitation state that tracks past visitations in a compact form.
        \item Presentation of a receding-horizon formulation with the extended error visitation state. 
        \item Theoretical analysis of asymptotic convergence of the infinite-horizon ergodic controller. 
        \item Experimental validation on 2D and 3D manifolds demonstrating effective coverage over long-durations. 
    \end{itemize}
    
    The remainder of this paper is organized as follows: Section~\ref{sec:preliminaries} provides preliminaries on ergodicity and MMD; Section \ref{sec:main} derives the proposed recursive framework; and Sections \ref{sec:results} and \ref{sec:conclusion} present experimental results and concluding remarks.

\section{Preliminaries}
\label{sec:preliminaries}

In this section, we introduce the problem of coverage control from the perspective of satisfying ergodicity as a controllable property. An introduction to kernel mean embeddings and its connection to ergodicity and ergodic control is provided. 

\subsection{Ergodicity}

    We consider the class of deterministic control-affine systems of the form
    \begin{equation}\label{eq:dynamics}
        \dot{x}=f(x)+g(x)u,
    \end{equation}
    where $x \in \mathcal{X} \subset \mathbb{R}^n$ is the state, $u \in \mathcal{U}\subset \mathbb{R}^m$ is the control input, $f:\mathcal{X}\to \mathbb{R}^n$ is the drift vector field, and $g:\mathcal{X}\to \mathbb{R}^{n\times m}$ is the input matrix field.
    Next, let us define a point on the coverage manifold and an twice differentiable map $\sigma : \mathcal{X} \to \Omega$ that converts state to a visited point on $\Omega$. 

    \begin{definition} \textit{(Time-Averaged Trajectory Distribution)}
        Let $x(t) : \mathbb{R}_+ \to \mathcal{X}$ be a trajectory following~\eqref{eq:dynamics} subject to a controller $u(t) : \mathbb{R}_+ \to \mathcal{U}$ at some initial condition $x_0 = x(0)$. The time-averaged trajectory distribution on $\Omega$ is defined as 
        \begin{equation}
            \rho_{x,t}(\omega) = \frac{1}{t} \int_0^t \delta[\omega- \sigma(x(\tau))] d\tau,
            \label{eq:time_averaged_distribution}
        \end{equation}   
        where $\delta$ is the Dirac-delta function.
    \end{definition}

    \begin{definition} \textit{(Ergodicity)} \label{def:ergodicity}
        ~\cite[Theorem 6.14]{walters_introduction_2000} Let $q : \Omega \to \mathbb{R}_+$ be a probability measure on $\Omega$, i.e., $q \in \mathcal{P}(\Omega)$. A trajectory $x(t)$ is said to be ergodic with respect to $q$ if its time-averaged trajectory distribution $\rho_{x,t}$ at the limit $t\to \infty$ converges weakly to $q$. More concretely, 
        \begin{equation*}
            \lim_{t\to \infty} \int_\Omega \phi(\omega) d \rho_{x, t}(\omega) = \int_\Omega \phi(\omega) dq(\omega)
        \end{equation*}
        for all continuous functions $\phi \in \mathcal{C}(\Omega)$, where,
        \begin{equation*}
            \int_\Omega \phi(\omega) d \rho_{x, t}(\omega) = \frac{1}{t}\int_{0}^{t} \phi(\sigma(x(\tau))) d\tau,
        \end{equation*}
        making the equality
        \begin{equation} \label{eq:ergodicity_limit}
            \lim_{t\to \infty} \frac{1}{t}\int_{0}^{t} \phi(\sigma(x(\tau)))d\tau = \int_\Omega \phi(\omega) dq(\omega).
        \end{equation}
    \end{definition}

    Intuitively, a trajectory that satisfies~\eqref{eq:ergodicity_limit} traverses all points in $\Omega$ proportional to the expected value of $q$ over $\Omega$ which is ideal for coverage and exploration problems. However, many systems are not naturally ergodic and require some form of controller to induce ergodicity. 

\subsection{Metrics on Ergodicity}

    Ergodicity can be rigorously quantified by defining a metric that evaluates the difference between a trajectory's time-averaged distribution and a given target distribution $q$. 
    Specifically, the distance to ergodicity~\eqref{def:ergodicity} given a trajectory $x(t)$ and target distribution $q$ as
    \begin{equation} \label{eq:generic_ergodic_metric}
        \mathcal{E}(x, q, t) = \Big\Vert \mathcal{F}[\rho_{x,t}] - \mathcal{F}[q] \Big\Vert_\mathcal{H}^2
    \end{equation}
    where $\mathcal{F}: \mathcal{P}(\Omega) \to \mathcal{H}$ is a integral transform~\cite{mathew2011, hughes2025} and $\Vert \cdot \Vert_\mathcal{H}$ is the associated functional norm.
    Ergodicity is then equivalently satisfied when the following holds 
    \begin{equation*}
        \lim_{t\to\infty} \mathcal{E}(x, q, t) = 0.
    \end{equation*}
    One can then formulate the optimal sub-ergodic control problem as 
    \begin{align} \label{eq:ergodic_optimal_control}
        \min_{u(t) \forall t \in [0, t_f]}\,\, & \mathcal{E}(x,q, t_f)  + \int_{0}^{t_f} \ell(x(t), u(t)) dt\\         
        \text{subject to } & \dot{x} = f(x) + g(x) u, \nonumber \\
        & u(t)\in\mathcal{U} \,\, \forall t \in [0, t_f] \nonumber \\ 
        \text{given }& x_0=x(0), q, \sigma \nonumber
    \end{align}    
    over the more tractable finite-time horizon $t_f<\infty$ where $\ell : \mathcal{X} \times \mathcal{U} \to \mathbb{R}$ is an additional running cost. \footnote{The resulting controller yields sub-ergodic trajectories rather than optimal ergodic trajectories.}
    The main challenge is that the finite-time restriction is necessary for solving ergodic controllers as infinite-time formulations are computationally challenging to optimize. 
    
    Prior work has solved these ergodic trajectory optimization problems in finite-horizon using the Fourier transform $\mathcal{F}$ \cite{miller2016, lee2024} where more recent approaches adopt general kernel-based techniques to approximate the ergodic metric~\cite{sun2024fast, hughes2025} over finite-time horizons. 
    In general, the infinite-time ergodic control problem has only been solved through manipulation of the Fourier ergodic metric~\cite{mathew2011, de_la_torre_2016} that is restricted to domains where the Fourier transform is well-defined, or through a heat-equation variation of ergodic control that requires solving a costly partial-differential equation at all times~\cite{ivic2022, bilaloglu2024tactile}. 
    In this work, we seek to derive the infinite-time ergodic controller through the more general, kernel formulation that allows more general specification of $\Omega$ as shown in~\cite{hughes2025}.

\subsection{Kernel Mean Embedding}

    An alternate formulation of an ergodic metric in~\eqref{eq:generic_ergodic_metric} can be derived through kernel mean embeddings as stand-in for $\mathcal{F}$ in a Reproducing Kernel Hilbert Space (RKHS).
    Let $k: \Omega \times \Omega \to \mathbb{R}_{\ge 0}$ be a positive-definite kernel on a compact domain $\Omega$, with $\cH$ as the associated RKHS. 
    The kernel mean embedding for a probability distribution $p$ is given by
    \begin{align}
        p \mapsto \mu_p = \mathbb{E}_{\omega \sim p}[ k(\omega, \cdot)].
    \end{align}
    A kernel is \emph{characteristic}~\cite{sriperumbudur2011} if this mapping is injective and a unique representation of $p$ in $\cH$.
    Given two probability distributions $p$ and $q$ on $\Omega$, the maximum mean discrepancy (MMD)~\cite{gretton2012} measures the difference between the distributions by comparing their kernel mean embeddings in the RKHS, 
    \begin{align}\label{eq:mmd_definition}
        \MMD_k^2(p, q) = \|\mu_p - \mu_q\|_{\cH}^2.
    \end{align}
    \begin{theorem}[Weak Convergence of MMD, \cite{simon2023}, Theorem 7]
        \label{theorem:weak_convergence}
        A bounded and measurable kernel metricizes the weak convergence of probability measures on a compact Hausdorff space $\Omega$ if and only if that kernel is both characteristic and continuous.
    \end{theorem}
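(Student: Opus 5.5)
We prove the two directions separately. Let $k$ be bounded, measurable and continuous, with RKHS $\cH$ on the compact Hausdorff space $\Omega$. To say that $k$ \emph{metrizes} weak convergence means two things. First, $\MMD_k$ must be a metric on $\mathcal{P}(\Omega)$. Second, $\MMD_k(p_n,p)\to 0$ must hold if and only if $p_n \to p$ weakly.

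\textbf{Sufficiency.} Assume $k$ is characteristic and continuous.

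\emph{Metric property.} Because $k$ is characteristic, $p\mapsto\mu_p$ is injective. Hence $\|\mu_p-\mu_q\|_\cH=0$ forces $p=q$. Symmetry and the triangle inequality are inherited from the Hilbert norm in \eqref{eq:mmd_definition}. So $\MMD_k$ is a metric.

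\emph{Weak convergence implies MMD convergence.} Expand
\begin{equation*}
\MMD_k^2(p_n,p)=\iint k\,d(p_n\otimes p_n)-2\iint k\,d(p_n\otimes p)+\iint k\,d(p\otimes p).
\end{equation*}
If $p_n\to p$ weakly on a compact Hausdorff space, then the product measures $p_n\otimes p_n$ and $p_n\otimes p$ converge weakly on $\Omega\times\Omega$. This follows from Stone--Weierstrass: finite sums of products $f(x)g(y)$ are dense in $\mathcal{C}(\Omega\times\Omega)$, and integrals of such products converge. Since $k$ is continuous and bounded, all three terms converge to $\iint k\,d(p\otimes p)$, so $\MMD_k(p_n,p)\to0$.

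\emph{MMD convergence implies weak convergence.} Assume $\MMD_k(p_n,p)\to0$. Since $\Omega$ is compact, $\mathcal{P}(\Omega)$ is weak-* compact by Banach--Alaoglu and Riesz. Take any subnet of $(p_n)$; it has a further subnet converging weakly to some $p'$. By the previous step, along that subnet $\mu_{p_n}\to\mu_{p'}$ in $\cH$. But also $\mu_{p_n}\to\mu_p$ by assumption, so $\mu_{p'}=\mu_p$. Injectivity then gives $p'=p$. Every subnet thus has a sub-subnet converging weakly to $p$, so the whole sequence converges weakly to $p$.

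\textbf{Necessity.} Assume $k$ metrizes weak convergence.

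\emph{$k$ is characteristic.} $\MMD_k$ must separate points of $\mathcal{P}(\Omega)$, which is exactly injectivity of $p\mapsto\mu_p$.

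\emph{$k$ is continuous.} This is the main obstacle. The plan is to take $x_n\to x$ in $\Omega$, using nets if $\Omega$ is not first countable. Then $\delta_{x_n}\to\delta_x$ weakly, so metrization gives
\begin{equation*}
\|k(x_n,\cdot)-k(x,\cdot)\|_\cH=\MMD_k(\delta_{x_n},\delta_x)\to0.
\end{equation*}
Hence the feature map $x\mapsto k(x,\cdot)$ is continuous into $\cH$. Since $k(x,y)=\langle k(x,\cdot),k(y,\cdot)\rangle_\cH$ and the inner product is jointly continuous, $k$ is jointly continuous.

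The delicate point is that this step needs only sequential or net continuity, plus boundedness of $k$ to ensure all the embeddings exist. Throughout, nets should replace sequences so the argument does not depend on a metrizable $\Omega$.
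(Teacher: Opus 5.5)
The paper gives no proof of this statement. It is quoted from Simon-Gabriel et al.\ (2023, Theorem~7) only to justify MMD as an ergodicity metric, so there is nothing internal to compare against. Your argument is correct, and it follows what is, to my knowledge, the standard proof behind that citation. Continuity of $k$ gives weak $\Rightarrow$ MMD convergence. Your subnet argument is the unrolled form of ``the identity $(\mathcal{P}(\Omega),\text{weak})\to(\mathcal{P}(\Omega),\mathrm{MMD}_k)$ is a continuous bijection from a compact space onto a Hausdorff space, hence a homeomorphism.'' Dirac masses then force continuity of the feature map, and hence of $k$.

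What a self-contained write-up gains over the citation is that it exposes conventions the statement silently relies on, so state them explicitly:
\begin{itemize}
\item Your opening line should assume only that $k$ is bounded and measurable, since continuity is a \emph{conclusion} in the necessity direction.
\item Metrization should be defined with nets from the outset. With sequences, the Dirac argument gives only sequential continuity of $x\mapsto k(x,\cdot)$, which is not enough on a non-first-countable $\Omega$.
\item $\mathcal{P}(\Omega)$ should mean Radon probability measures, so that Riesz and Banach--Alaoglu give weak-* compactness.
\end{itemize}

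Finally, you need not worry about Borel sets of $\Omega\times\Omega$ versus the product $\sigma$-algebra in your Stone--Weierstrass step. Under the sufficiency hypotheses, $x\mapsto k(x,\cdot)$ is a continuous map of the compact space $\Omega$ into $\mathcal{H}$. It is also injective, because Diracs are distinct and $k$ is characteristic. Hence $\Omega$ embeds in a Hilbert space and is automatically metrizable.
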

    Theorem~\ref{theorem:weak_convergence} supports that MMD is a valid metric for ergodicity. In addition, it is valuable to define the expanded variation of the MMD metric. 
    \begin{lemma}[Expanded MMD, \cite{gretton2012}, Lemma 4]
    Let $x, x'$, and $y, y'$ be independent variables with distribution $p, q$ respectively on $\Omega$. 
    MMD can be expanded to obtain the following
    \begin{align} \label{eq:expanded_mmd}
            \MMD_k^2(p, q) =& \mathbb{E}_{x,x'}[k(x, x')] \\ &- 2\mathbb{E}_{x,y}[k(x, y)] + \mathbb{E}_{y, y'}[k(y, y')], \nonumber
    \end{align} 
    where $x' \sim p$ and $y' \sim q$ are independent copies of $x$, $y$ with the same distribution. 
    \end{lemma}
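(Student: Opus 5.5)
The plan is to expand the squared RKHS norm in \eqref{eq:mmd_definition} with the inner product and then evaluate each inner product using the reproducing property. Write
\begin{equation*}
\|\mu_p - \mu_q\|_{\cH}^2 = \langle \mu_p, \mu_p\rangle_{\cH} - 2\langle \mu_p, \mu_q\rangle_{\cH} + \langle \mu_q,\mu_q\rangle_{\cH}.
\end{equation*}
This step uses only bilinearity and symmetry of the real inner product on $\cH$.

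The second step is to justify that $\mu_p$ and $\mu_q$ are actually elements of $\cH$, and that they satisfy $\langle \mu_p, h\rangle_{\cH} = \mathbb{E}_{x\sim p}[h(x)]$ for every $h\in\cH$. Since $k$ is continuous and bounded on the compact set $\Omega$, we have $\mathbb{E}_{x\sim p}\|k(x,\cdot)\|_{\cH} = \mathbb{E}_{x\sim p}\sqrt{k(x,x)}<\infty$. Hence $\omega\mapsto k(\omega,\cdot)$ is Bochner integrable, and $\mu_p$ is a well-defined element of $\cH$. Bochner integrals commute with bounded linear functionals, so $\langle \mu_p, h\rangle_{\cH} = \mathbb{E}_{x}\langle k(x,\cdot), h\rangle_{\cH} = \mathbb{E}_x[h(x)]$ by the reproducing property. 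An alternative route is Riesz representation: the functional $h\mapsto \mathbb{E}_x[h(x)]$ is bounded by $\sup_\omega\sqrt{k(\omega,\omega)}\,\|h\|_{\cH}$, so it has a representer, and that representer coincides with $\mu_p$. I expect this to be the only real obstacle, namely interchanging expectation and inner product. Boundedness of $k$ on compact $\Omega$ is what makes it go through.

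Finally, apply this twice to each term. For the mixed term,
\begin{equation*}
\langle \mu_p,\mu_q\rangle_{\cH} = \mathbb{E}_{x}[\mu_q(x)] = \mathbb{E}_x\mathbb{E}_y[k(x,y)] = \mathbb{E}_{x,y}[k(x,y)].
\end{equation*}
The last equality uses independence of $x$ and $y$ together with Fubini, which is valid because $k$ is bounded. The same argument gives $\langle\mu_p,\mu_p\rangle_{\cH} = \mathbb{E}_{x,x'}[k(x,x')]$ with $x'$ an independent copy of $x$, and likewise $\langle\mu_q,\mu_q\rangle_{\cH} = \mathbb{E}_{y,y'}[k(y,y')]$. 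Substituting these three expressions into the expansion yields \eqref{eq:expanded_mmd}.
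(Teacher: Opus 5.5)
Your proof is correct. The paper gives no proof of its own and simply cites Gretton et al., and your argument is the standard one from that source: expand $\|\mu_p-\mu_q\|_{\cH}^2$ by bilinearity, establish $\langle \mu_p, h\rangle_{\cH} = \mathbb{E}_{x\sim p}[h(x)]$ via Riesz or Bochner, and evaluate each inner product with the reproducing property.

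You add an explicit hypothesis the lemma leaves implicit, namely that $k$ is continuous and hence bounded on compact $\Omega$. That is harmless here, and your Riesz route also works under the weaker condition used in the cited reference, $\mathbb{E}_p\sqrt{k(x,x)}<\infty$ and $\mathbb{E}_q\sqrt{k(y,y)}<\infty$. In that case you replace the $\sup$ bound by this expectation and justify Fubini through $|k(x,y)|\le\sqrt{k(x,x)\,k(y,y)}$ together with independence.
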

    The advantage of MMD is that one does not need access to the underlying distributions to compute the metric through Monte-Carlo sampling of the expected values. 

\subsection{Ergodic MMD Metric}

    Given a trajectory $x(t)$ and its time-averaged distribution $\rho_{x,t}$, the ergodic MMD metric can readily be formulated.

    \begin{prop}[Ergodic MMD Metric]
        Given a target distribution $q : \Omega \to \mathbb{R}_+$, trajectory $x(t) : \mathbb{R}_+ \to \mathcal{X}$ and its time-averaged distribution $\rho_{x,t}$ the ergodic maximum mean discrepancy metric for $t>0$ is given as 
        \begin{align}
            \mathcal{E}_{\text{MMD}}(x, q, t) &=
            \frac{1}{t^2}\iint_{t,t} k(\omega_{x,\tau}, \omega_{x,\tau'}) d\tau d \tau'
            \\ & \quad - \frac{2}{t} \iint_{\Omega, t} k(\omega_{x,
        \tau}, \omega) d\tau dq + z(q), \nonumber
    \end{align}
    where $\omega_{x,\tau} = \sigma(x(\tau))$ and $z(q) = \iint_{\Omega, \Omega} k(\omega, \omega') dq dq$ is a normalization constant.
    \end{prop}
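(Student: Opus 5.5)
The proof is a direct substitution of the time-averaged distribution \eqref{eq:time_averaged_distribution} into the expanded MMD \eqref{eq:expanded_mmd}, taking $p = \rho_{x,t}$. We set $\mathcal{E}_{\text{MMD}}(x,q,t) := \MMD_k^2(\rho_{x,t}, q) = \|\mu_{\rho_{x,t}} - \mu_q\|_{\cH}^2$, which is an instance of \eqref{eq:generic_ergodic_metric} with $\mathcal{F}$ the kernel mean embedding. Theorem~\ref{theorem:weak_convergence} justifies using this quantity as the ergodic metric, since $\Omega$ is compact and $k$ is continuous and characteristic. The three terms of the expansion are then evaluated one at a time.

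\emph{Key identity.} Expectations under $\rho_{x,t}$ become time averages along the trajectory. For any measurable bounded $\phi$ on $\Omega$, the sifting property of the Dirac delta together with Fubini's theorem gives
\begin{equation*}
\int_\Omega \phi(\omega)\, d\rho_{x,t}(\omega) = \frac{1}{t}\int_0^t \phi(\sigma(x(\tau)))\,d\tau .
\end{equation*}
This is exactly the identity already recorded in Definition~\ref{def:ergodicity}. Here $\phi$ is measurable and bounded because $k$ is bounded and continuous on the compact set $\Omega$. It follows that the embedding is $\mu_{\rho_{x,t}}(\cdot) = \frac{1}{t}\int_0^t k(\omega_{x,\tau},\cdot)\,d\tau$.

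\emph{The three terms.}
\begin{itemize}
\item For $\mathbb{E}_{x,x'}[k(x,x')]$ with $x,x'$ independent draws from $\rho_{x,t}$, the expectation is an integral against the product measure $\rho_{x,t}\otimes\rho_{x,t}$. Applying the identity in each variable separately (Fubini) gives $\frac{1}{t^2}\int_0^t\int_0^t k(\omega_{x,\tau},\omega_{x,\tau'})\,d\tau\,d\tau'$.
\item For the cross term, apply the identity in the $\rho_{x,t}$ variable only and leave the integral against $q$ untouched. This gives $-\frac{2}{t}\int_\Omega\int_0^t k(\omega_{x,\tau},\omega)\,d\tau\,dq(\omega)$.
\item The last term involves only $q$, so it is the constant $z(q)$.
\end{itemize}
Summing the three terms gives the stated formula. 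As a cross-check, the same result follows by expanding $\langle \mu_{\rho_{x,t}}-\mu_q, \mu_{\rho_{x,t}}-\mu_q\rangle_{\cH}$ with the reproducing property.

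\emph{Main obstacle.} The only delicate step is making the Dirac-delta manipulation rigorous. The cleanest route is to define $\rho_{x,t}$ as the pushforward of normalized Lebesgue measure on $[0,t]$ under $\tau\mapsto\sigma(x(\tau))$; the change-of-variables formula for pushforward measures then gives the key identity directly. We also need Fubini for the double integrals, which is justified because $k$ is bounded and all measures are finite, with $t>0$ ensuring the normalization is well defined. Measurability of $\tau\mapsto\sigma(x(\tau))$ follows from continuity of the trajectory and of $\sigma$.
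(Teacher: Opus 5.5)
Your proposal is correct and follows essentially the same route as the paper. Like the paper, you substitute $\rho_{x,t}$ into the expanded MMD lemma and use the Dirac-delta (sifting) structure of $\rho_{x,t}$ to turn each expectation under $\rho_{x,t}$ into a time integral, with $z(q)$ as the $q$-only term; your pushforward-measure definition of $\rho_{x,t}$ and the Fubini and boundedness checks make rigorous the step the paper does informally.
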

    \begin{proof}
        Substituting $\rho_{x,t}$ and $q$ into the definition of MMD in~\eqref{eq:mmd_definition} and then applying the expanded MMD formulation in~\eqref{eq:expanded_mmd} we get the following
        \begin{align}
            \text{MMD}_k^2(\rho_{x,t}, q) &= \|\mu_{\rho_{x,t}} - \mu_q\|_{\cH}^2 \\
            &= \mathbb{E}_{\omega, \omega' \sim \rho_{x,t}}[k(\omega, \omega')] \label{eq:full_exp}\\ 
            &\hspace{2em} - 2\mathbb{E}_{\omega \sim \rho_{x,t}, \omega' \sim q}[k(\omega, \omega')] \nonumber \label{eq:reduced_exp}\\
            &\hspace{4em}+ \mathbb{E}_{\omega, \omega' \sim q}[k(\omega, \omega')] \nonumber
             \\ &=
            \frac{1}{t^2}\iint_{t,t} k(\omega_{x,\tau}, \omega_{x,\tau'}) d\tau d \tau
            \\ & \hspace{2em} - \frac{2}{t} \iint_{\Omega, t} k(\omega_{x,\tau}, \omega) d\tau dq, \nonumber
             \\ & \hspace{4em} + \iint_{\Omega, \Omega} k(\omega, \omega') dq dq \nonumber \\ 
                \mathcal{E}_\text{MMD}(x, q, t) &= \frac{1}{t^2}\iint_{t,t} k(\omega_{x,\tau}, \omega_{x,\tau'}) d\tau d \tau
             \nonumber \\ 
             & \hspace{2em} - \frac{2}{t} \iint_{\Omega, t} k(\omega_{x,\tau}, \omega) d\tau dq + z(q), \nonumber 
        \end{align}
        where~\eqref{eq:reduced_exp} is the result of the Dirac-delta in $\rho_{x,t}$ that reduces the expected values to~\eqref{eq:full_exp} and $z(q)$ is the normalization constant that does not depend on $x$.
    \end{proof}

    Computing the integral over $\Omega$ can be computationally expensive in practice. 
    Instead, it is common to approximate the integral through Monte-Carlo samples $\{\omega_i \}_{i=1}^M \sim q$ as 
    \begin{align} \label{eq:ergodic_mmd_approx}
            \mathcal{E}_{\text{MMD}}(x, q, t) \approx&
            \frac{1}{t^2}\iint_{t,t} k(\omega_{x,\tau}, \omega_{x,\tau'}) d\tau d \tau'
            \\ & \quad - \frac{2}{t M} \sum_{i=1}^M \int_{t} k(\omega_{x,t}, \omega_i) d\tau \nonumber \\ 
            & \quad \quad \quad \quad+ \frac{1}{M^2} \sum_{i=1}^M \sum_{j=1}^M k(\omega_i, \omega_j). \nonumber
    \end{align}
    It is possible to now use~\eqref{eq:ergodic_mmd_approx} in a optimal control problem as described in~\eqref{eq:ergodic_optimal_control} for a finite time $t>0$.

\section{Infinite-Horizon Ergodic Control}
\label{sec:main}

    In this section, we derive the infinite-horizon ergodic controller based on kernel mean embeddings and then prove the convergence of the proposed control law to the target distribution.

    \subsection{Time-Aug. Ergodic MMD with Visitation Error States}
    In order to derive the infinite time ergodic controller, we first adapt the metric $\mathcal{E}_{\text{MMD}}$ with a temporal scaling factor $t^2$.
    \begin{definition}
        \textit{(Time-Augmented Ergodic MMD Metric)} The time-augmented ergodic MMD metric is defined as $\tilde{\mathcal{E}}(x, q, t) = |t|^2\mathcal{E}_{\text{MMD}}(x, q, t)$, or equivalently
        \begin{align}
            \tilde{\mathcal{E}}(x, q, t) &= |t|^2\mathcal{E}_{\text{MMD}}(x, q, t) \\ 
            &= |t|^2 \|\mu_{\rho_{x, t}} - \mu_q \|_{\mathcal{H}}^2 \nonumber 
        \end{align}
    \end{definition}
    \begin{prop}[Visitation Error State Representation of Ergodic MMD Metric]
        Let $e(x,q, t) = t \left(\mu_{\rho_{x, t}} - \mu_q\right)$ be known as the visitation error state. 
        Then, the time-augmented ergodic MMD metric $\tilde{\mathcal{E}}_{\text{MMD}}$ is equivalently represented as 
        \begin{align}
            \tilde{\mathcal{E}}_{\text{MMD}}(x, q, t) = \|e(x,q, t)\|_{\mathcal{H}}^2
        \end{align}
        where the visitation error state $e(x,q, t)$ is the solution to the following initial value problem:
        \begin{align} \label{eq:ivp_error_state}
            \frac{d}{dt} e(x,q, t) &= k(\sigma(x(t)), \cdot) - \mu_q, \quad e(x,q, 0) = 0. \\ 
            \frac{d}{dt} x &= f(x) + g(x) u, \quad x(0) = x_0. \nonumber
        \end{align}
    \end{prop}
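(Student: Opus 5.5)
The plan has two parts. The norm identity follows directly from the definitions; the ODE characterization needs an explicit formula for $e$ that can be differentiated in $t$.

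\textbf{Norm identity.} Start from the definition of $\tilde{\mathcal{E}}$:
\[
\tilde{\mathcal{E}}(x,q,t)=|t|^2\|\mu_{\rho_{x,t}}-\mu_q\|_{\mathcal{H}}^2=\|t(\mu_{\rho_{x,t}}-\mu_q)\|_{\mathcal{H}}^2 .
\]
The second equality is homogeneity of the RKHS norm, $\|c h\|_{\mathcal{H}}=|c|\,\|h\|_{\mathcal{H}}$ for a scalar $c$. The right-hand side is $\|e(x,q,t)\|_{\mathcal{H}}^2$ by the definition of $e$.

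\textbf{Closed form for $e$.} Substitute the time-averaged distribution \eqref{eq:time_averaged_distribution} into the kernel mean embedding. The Dirac delta reduces the expectation to a time integral, exactly as in the proof of the Ergodic MMD Metric proposition:
\[
\mu_{\rho_{x,t}}=\frac1t\int_0^t k(\sigma(x(\tau)),\cdot)\,d\tau .
\]
Hence
\[
e(x,q,t)=\int_0^t k(\sigma(x(\tau)),\cdot)\,d\tau-t\mu_q=\int_0^t\big(k(\sigma(x(\tau)),\cdot)-\mu_q\big)\,d\tau .
\]
This is an $\mathcal{H}$-valued (Bochner) integral. The factor $t$ cancels the $1/t$, so the expression stays well defined as $t\to0$.

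\textbf{Initial value problem.} Setting $t=0$ in the integral form gives $e(x,q,0)=0$. Differentiating in $t$ with the fundamental theorem of calculus for Bochner integrals gives
\[
\frac{d}{dt}e=k(\sigma(x(t)),\cdot)-\mu_q ,
\]
where $x$ solves the control-affine dynamics from $x_0$. Conversely, any solution of \eqref{eq:ivp_error_state} integrates to this same expression, so the solution of the IVP is unique and equals $e$.

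\textbf{Main obstacle.} The step needing care is justifying that $\mathcal{H}$-valued differentiation and the fundamental theorem of calculus apply. That requires the integrand $\tau\mapsto k(\sigma(x(\tau)),\cdot)$ to be continuous into $\mathcal{H}$. The standard identity
\[
\|k(a,\cdot)-k(b,\cdot)\|_{\mathcal{H}}^2=k(a,a)-2k(a,b)+k(b,b)
\]
reduces this to continuity of $k$ on $\Omega\times\Omega$. The kernel is continuous under the standing assumption of Theorem~\ref{theorem:weak_convergence}, $\sigma$ is twice differentiable, and the trajectory $x(\tau)$ of \eqref{eq:dynamics} is continuous, so the composition is continuous into $\mathcal{H}$. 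If one prefers to avoid vector-valued calculus, the same computation can be done pointwise: evaluate at each $\omega\in\Omega$ via the reproducing property and apply the scalar fundamental theorem of calculus.
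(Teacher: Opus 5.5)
Your proposal is correct and follows the paper's proof essentially step for step: homogeneity of the norm gives $\tilde{\mathcal{E}}_{\text{MMD}}=\|e\|_{\mathcal{H}}^2$, collapsing the Dirac delta gives $e(x,q,t)=\int_0^t \big(k(\sigma(x(\tau)),\cdot)-\mu_q\big)\,d\tau$, and differentiating in $t$ gives the initial value problem (the paper's ``Leibniz integral rule'' is your fundamental theorem of calculus). Your continuity-into-$\mathcal{H}$ justification and the uniqueness remark add rigor the paper omits. One small correction: continuity of $k$ is not a standing hypothesis of Theorem~\ref{theorem:weak_convergence}, which is an if-and-only-if characterization, but the paper does assume it implicitly, e.g.\ through the smoothness of $k$ in Theorem~\ref{thm:convergence}.
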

    \begin{proof}
        It is straightforward to verify that $|t|^2 \mathcal{E}_{\text{MMD}}(x, q, t) = \|e(x,q, t)\|_{\mathcal{H}}^2 = \|t (\mu_{\rho_{x, t}} - \mu_q)\|_{\mathcal{H}}^2 = |t|^2 \|\mu_{\rho_{x, t}} - \mu_q\|_{\mathcal{H}}^2$.
        To obtain the initial value problem, we expand the expression $e(x,q, t) = t (\mu_{\rho_{x, t}} - \mu_q)$ to yield the following integral form
        \begin{align*}
            e(x,q, t) &= \int_\Omega k(\omega, \cdot) \left( \int_0^t \delta[\omega - \sigma(x(\tau))] d\tau \right) d\omega -t\mu_q\\
            &= \int_0^t k(\sigma(x(\tau)), \cdot) d\tau - t \mu_q \\
            &= \int_0^t k(\sigma(x(\tau)), \cdot)  -  \mu_q \, d\tau.
        \end{align*}
        Taking the time derivative and applying the Leibniz integral rule yields
        \begin{align} \label{eq:error_update}
            \frac{d}{dt} e(x,q, t) &= \frac{d}{dt} \left( \int_0^t k(\sigma(x(\tau)), \cdot) - \mu_q \, d\tau \right) \nonumber \\
            &= k(\sigma(x(t)), \cdot) - \mu_q
        \end{align}     
        with the initial condition $e(x,q, 0) = 0$.   
    \end{proof}
    Note that the visitation error state $e(x,q, t)$ is a functional over $\Omega$ that is integrated out through the RKHS $\mathcal{H}$ norm.
    We define $e(t) = e(x, q, t)$ as the visitation error state functional along the trajectory $x(t)$, that is, $e(t)$ tracks the deviation of the trajectory from the target distribution over time. 
    In practice, we represent $e(t)$ as a vector of evaluations on a set of sampled points $\{\omega_i\}_1^{M} \sim q$, where $e_{\omega_i}(t)$ is the evaluation of the error state at point $\omega_i$.

\subsection{Infinite-Horizon Ergodic Control via KME}
    Given the visitation error state representation of the ergodic MMD metric, we can derive a steepest descent control law to drive the error state toward zero as $t\to \infty$.
    \begin{theorem}[Infinite-Horizon Ergodic Control] \label{thm:convergence}
        Consider the initial value problem defined in~\eqref{eq:ivp_error_state} for a given target distribution $q \in \mathcal{P}(\Omega)$, RKHS $\mathcal{H}$, and characteristic kernel $k$.
        Assuming smoothness and boundedness conditions on $f$, $g$, and $k$, 
        the steepest descent control law that locally minimizes the ergodic MMD metric is given by
        \begin{align}\label{eq:ergodic_ctrl_law}
            u(x, e, t) = - \alpha \left( g(x)^\top \int_\Omega e_\omega(t) \nabla_x k(\sigma(x(t)), \omega) \, d\omega \right)
        \end{align}
        where $\alpha : \mathbb{R}^m \to \mathcal{U}$ is a projection operator that ensures the control input remains within the admissible set $\mathcal{U}$.
    \end{theorem}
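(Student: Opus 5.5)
We propose to obtain the control law~\eqref{eq:ergodic_ctrl_law} by differentiating the time-augmented metric along the coupled dynamics~\eqref{eq:ivp_error_state}. The aim is to isolate the part of the derivative that depends on $u$ and then choose $u$ to make that part as negative as possible. There is a subtlety: $\dot e$ does not depend on $u$ directly, only on $x$. So we will work with a Lyapunov-type functional whose second-order variation (or whose gradient with respect to the state) carries $u$ explicitly.

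Step one is the first derivative. By the Visitation Error State Proposition, $\tilde{\mathcal{E}}_{\text{MMD}}=\|e(t)\|_\cH^2$, and by~\eqref{eq:ivp_error_state},
\begin{align*}
\frac{d}{dt}\|e(t)\|_\cH^2 = 2\langle e(t), k(\sigma(x(t)),\cdot)-\mu_q\rangle_\cH .
\end{align*}
Define the scalar potential $V(x,e)=\langle e, k(\sigma(x),\cdot)\rangle_\cH$. In the sampled-evaluation representation used in practice, this is $\int_\Omega e_\omega\, k(\sigma(x),\omega)\,d\omega$; we identify the inner product with this integral, as the paper does when representing $e$ by evaluations at the points $\omega_i$. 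The only $x$-dependent part of the descent rate is then $2V(x,e)$.

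Step two treats the error state as slowly varying relative to $x$. We differentiate $V$ along the state dynamics with $e$ frozen:
\begin{align*}
\frac{d}{dt}V = \Big(\int_\Omega e_\omega(t)\nabla_x k(\sigma(x),\omega)\,d\omega\Big)^\top (f(x)+g(x)u).
\end{align*}
Here the smoothness of $k$ and $\sigma$ (twice differentiable) justifies differentiating under the integral, and boundedness justifies interchanging the derivative with the integral over the compact set $\Omega$. The control enters affinely through $b(x,e)^\top u$, with
\begin{align*}
b(x,e)=g(x)^\top\int_\Omega e_\omega\nabla_x k(\sigma(x),\omega)\,d\omega .
\end{align*}
The steepest descent direction of this affine function in $u$ is $-b$. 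Projecting onto $\mathcal{U}$ via $\alpha$ gives $u=-\alpha(b)$, which is~\eqref{eq:ergodic_ctrl_law}. Descent is preserved under the projection provided $\alpha$ is a projection onto a convex set containing $0$, since then $b^\top\alpha(b)\ge 0$. With this choice, $\frac{d}{dt}V\le \nabla_xV^\top f(x)$, i.e.\ the control can only decrease the growth of $\|e\|^2$ locally.

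The main obstacle is making "locally minimizes" precise. Because $u$ affects $\|e\|^2$ only at second order, the argument must be phrased as minimizing the second time derivative $\frac{d^2}{dt^2}\|e\|_\cH^2$ at each instant. We will compute this second derivative and show that its $u$-dependent term is exactly $2b^\top u$; the remaining terms, namely $2\|k(\sigma(x),\cdot)-\mu_q\|^2$ and the drift term, are independent of $u$. That the first term is bounded follows from the boundedness of $k$. This gives the steepest-descent claim without needing the frozen-$e$ heuristic, and I would present it that way. I would defer the asymptotic convergence statement, which is where the characteristic property and Theorem~\ref{theorem:weak_convergence} enter, to a separate argument.
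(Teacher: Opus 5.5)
Your proposal is correct and follows essentially the paper's route. Since $\dot e$ does not depend on $u$, you isolate the only $u$-dependent piece, $\langle e,\ddot e\rangle_{\cH}$, of the second-order expansion of $\|e\|_{\cH}^2$ (using the same identification of the inner product with $\int_\Omega e_\omega \nabla_x k(\sigma(x),\omega)\,d\omega$ that the paper makes) and take the projected steepest-descent direction $-\alpha(b)$. Your remark that $b^\top\alpha(b)\ge 0$ for a projection onto a convex set containing $0$ is a useful refinement: the paper claims $-\alpha(b)$ is the exact minimizer over the ball $\|u\|\le u_{\max}$, but the true minimizer is $-u_{\max}b/\|b\|$, so that claim holds only when $\|b\|\ge u_{\max}$.
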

    \begin{proof}
        We first compute the second-order Taylor expansion of $\tilde{\mathcal{E}}_\text{MMD}$ centered around $t$ as follows
        \begin{align*}
            \tilde{\mathcal{E}}_\text{MMD}(x, q, t+\Delta t) &\approx \tilde{\mathcal{E}}_\text{MMD}(x, q, t) + 2\Delta t \langle e(t), \dot{e}(t) \rangle_{\mathcal{H}} \\
            & \quad + (\Delta t)^2 \left( \langle \dot{e}(t), \dot{e}(t)\rangle_{\mathcal{H}} + \langle e(t), \ddot{e}(t) \rangle_{\mathcal{H}} \right).
        \end{align*}
        Using the expression of $\dot{e}(t)$ from~\eqref{eq:ivp_error_state}, and noting that 
        \begin{align*}
            \ddot{e}(t) & = \nabla_x k(\sigma(x(t)), \cdot)^\top \dot{x}(t) \\
            & = \nabla_x k(\sigma(x(t)), \cdot)^\top (f(x(t)) + g(x(t)) u(t))
        \end{align*}
        is the only term that directly depends on the control input $u(t)$, we formulate the following steepest descent optimization 
        \begin{align*}
            \min_{u(t) \in \mathcal{U}} \Big \langle e(t), \nabla_x k(\sigma(x(t)), \cdot)^\top \dot{x}(t) \Big \rangle_{\mathcal{H}} \\ 
            \text{subject to} \quad \dot{x}(t) = f(x(t)) + g(x(t)) u(t).
        \end{align*}
        For $\mathcal{U} = \{ u \in \mathbb{R}^m \mid \|u\| \leq u_\text{max} \}$, the optimization problem has the exact solution 
        \begin{equation*}
            u(x, e, t) = - \alpha \left( g(x)^\top \int_\Omega e_\omega(t) \nabla_x k(\sigma(x(t)), \omega) \, d\omega \right)
        \end{equation*}
        where $\alpha : \mathbb{R}^m \to \mathcal{U}$ is a projection operator that ensures the control input remains within the admissible set $\mathcal{U}$.
    \end{proof}
    
    In the scenario where $g(x)$ is not full rank, the control law can be modified to project the descent direction onto the range of $g(x)$.
    Additionally, the control law can be implemented in a receding horizon fashion by solving the optimization problem over a finite time horizon $t_h$ and applying the control input for a short duration before re-solving the optimization problem at the next time step which we demonstrate in the following sections.
    In summary, the optimization formulation can be solved numerically using off-the-shelf solvers and is given as 
    \begin{align}\label{eq:plannig_opt}
        &\min_{u(t) \in [t,t+t_h]} \tilde{\mathcal{E}}(x, q, t+t_h) \approx \frac{1}{M}\sum_{i=1}^M e_{\omega_i}(x, q, t+t_h)^2 \nonumber \\
        &\text{subject to } \text{Eq.}~\eqref{eq:ivp_error_state} \text{ given } x(t), e(x,q, t)
    \end{align}
    where $x(t), e(x, q, t)$ is the system state at time $t$.
    
    The primary advantage of the proposed visitation error state representation is that it acts like a fixed memory of where the system has visited in the past. As a result, the ergodic system becomes independent from the total mission duration. 
    In standard MMD-based ergodic control \eqref{eq:ergodic_mmd_approx}, the explicit integration of time required tracking some visitation history to optimize for control.  
    This led to scaling issues with the time-horizon, namely $\mathcal{O}((t_h + t_p)^2)$, where $t_p$ is the past trajectory time and $t_h$ is the planning horizon. 
    For long-duration missions, the computational cost becomes prohibitive for real-time control. 
    In contrast, by representing visitation as an embedding via $e$, the proposed control law \eqref{eq:ergodic_ctrl_law} scales as $\mathcal{O}(M)$ and $\mathcal{O}(t_h + M)$, if using a discrete integration $N=t_h/\Delta t$ in~\eqref{eq:plannig_opt}.

\begin{figure*}
    \centering
    \includegraphics[width=\linewidth]{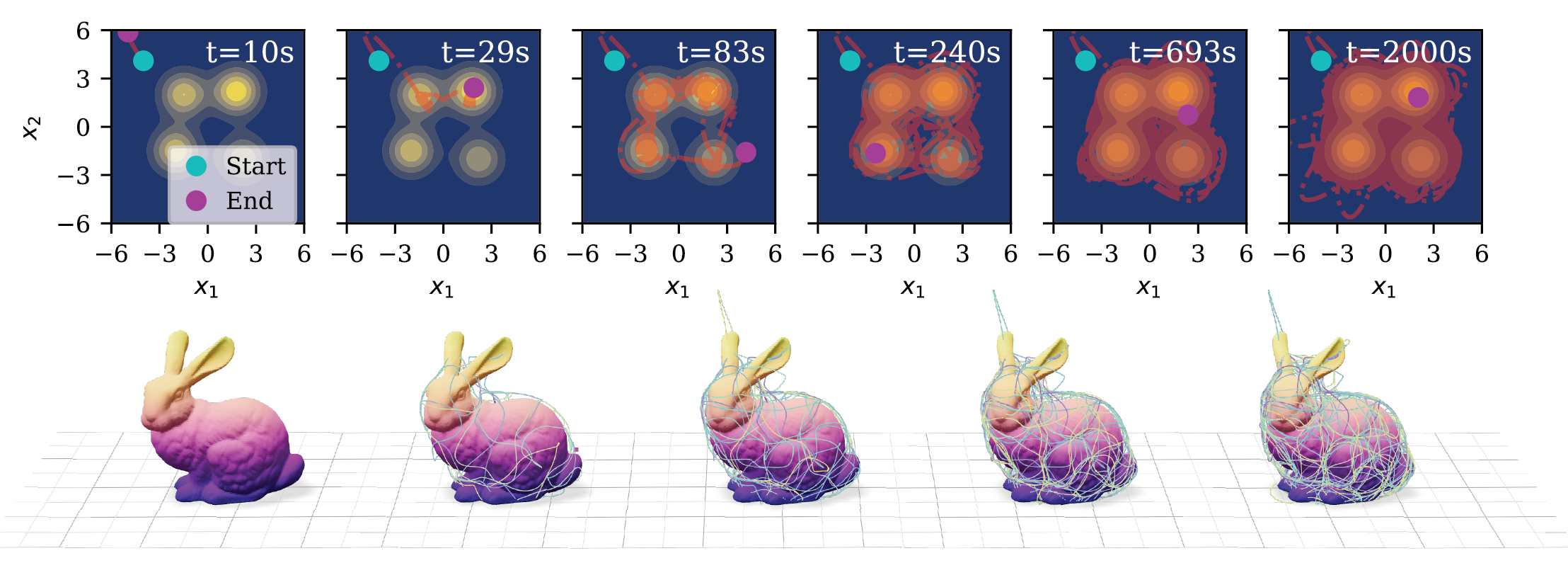}
    \vspace{-20pt}
    \caption{\textbf{Time Series Evolution of Ergodic Controller~\eqref{eq:ergodic_ctrl_law}.} Illustrated are time snapshots of the infinite-time ergodic controller derived form the kernel mean embedding for (top) a planar multi-modal target distribution and (bottom) a bunny manifold with uniform distribution. The dynamics are given by the single-integrator system and the error visitation state is tracked via~\eqref{eq:ivp_error_state} for $M=500$ samples. As $t\to \infty$, the ergodic controller fully covers the domain without increasing computation. }
    \label{fig:time_series}
    \vspace{-15pt}
\end{figure*}
    
\subsection{Asymptotics}

    We can show that under certain regularity conditions, the steepest descent control law in \eqref{eq:ergodic_ctrl_law} ensures that the ergodic MMD metric converges to zero as $t \to \infty$.
    \begin{theorem}[Ergodicity of Closed-Loop Ergodic Control]
        Consider the single-integrator dynamics $\dot{x} = g(x)u$ where $\mathcal{X},  \mathcal{U}$ are compact, $g(x)$ is full rank, and for simplicity $\alpha$ is the identity map.
        Then, the steepest descent control law in \eqref{eq:ergodic_ctrl_law} satisfies the following closed-loop dynamics
        \begin{equation}            
            \dot{x} = - g(x)^\top \int_\Omega e_\omega(t) \nabla_x k(\sigma(x(t)), \omega) \, d\omega,
        \end{equation}
        and generates a trajectory that asymptotically converges to the target distribution $q$ in the ergodic sense that $\lim_{t \to \infty} \tilde{\mathcal{E}}_{\text{MMD}}(x, q, t) = 0$.
    \end{theorem}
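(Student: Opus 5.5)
The plan is to treat the closed loop as a coupled ODE on $\mathcal{X}\times\mathcal{H}$ and run a Lyapunov/Barbalat argument with $V(t)=\tfrac12\|e(t)\|_{\mathcal{H}}^2=\tfrac12\tilde{\mathcal{E}}_{\text{MMD}}(x,q,t)$ as the candidate function. The result to be shown is $V(t)\to 0$.

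\emph{Step 1: closed-loop dynamics.} Setting $f\equiv 0$ and taking $\alpha$ to be the identity in \eqref{eq:ergodic_ctrl_law} gives
\[
\dot x = g(x)u = -g(x)g(x)^\top \int_\Omega e_\omega(t)\nabla_x k(\sigma(x),\omega)\,d\omega .
\]
With $g$ full rank, this is the stated descent flow, up to the positive-definite factor $gg^\top$, which I absorb as a metric. I read the integral as the RKHS pairing $\langle e(t),\nabla_x k(\sigma(x),\cdot)\rangle_{\mathcal{H}}$. By the reproducing property this equals $\nabla_x\big[e_t(\sigma(x))\big]$. So the state performs gradient descent on the current error function $\omega\mapsto e_t(\omega)$ pulled back through $\sigma$. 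Since $\mathcal{X}$ and $\mathcal{U}$ are compact and $k,\sigma$ are $C^2$ and bounded, the vector field is locally Lipschitz. Hence solutions exist for all $t\ge 0$.

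\emph{Step 2: derivative identities.} From \eqref{eq:ivp_error_state}, $\dot V=\langle e,\dot e\rangle_{\mathcal{H}} = e_t(\sigma(x(t)))-\int_\Omega e_t\,dq$. This is the value of the error at the current location minus its $q$-average. Along the closed loop, the current value satisfies
\[
\tfrac{d}{dt}\,e_t(\sigma(x(t))) = k(\sigma(x),\sigma(x))-\mu_q(\sigma(x)) - \big\|g^\top\nabla_x e_t(\sigma(x))\big\|^2 .
\]
The first two terms are uniformly bounded by $2\sup k$. The last term is the dissipation produced by the control. The intended mechanism is that the control keeps $x$ near low values of $e_t$, that is, near under-visited regions. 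At any point where $e_t$ is at or below its $q$-average we get $\dot V\le 0$.

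\emph{Step 3: boundedness and Barbalat.} I would first show that $V$ stays bounded. Suppose $\|e\|$ were large. Then $e_t$ has a large oscillation, and the descent flow moves $\sigma(x)$ into its sublevel sets faster than the bounded source term $k(\sigma,\cdot)-\mu_q$ can raise it. This should force $\dot V<0$ on average over short windows, giving a bound $\sup_t V(t)<\infty$. Next, the uniform bounds on $\dot e$, $\ddot e$ and $\dot x$ make $\dot V$ uniformly continuous. Integrating the window estimate shows that $\dot V$ is bounded above by a summable negative part, so $V(t)$ converges. Barbalat's lemma then yields $\dot V\to 0$ together with $\|g^\top\nabla e_t(\sigma(x))\|\to 0$.

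\emph{Step 4: identifying the limit (main obstacle).} The hardest step is to show that these limiting conditions force $e\to 0$ rather than convergence to a nonzero limit. Taken together, $x$ sits asymptotically at critical points of $e_t$ where $e_t$ equals its $q$-mean. I would first try a LaSalle-type invariance argument on the $\omega$-limit set, which is compact by Step 3. On an invariant trajectory, $\dot e=k(\sigma(x),\cdot)-\mu_q$ must keep $\langle e,\dot e\rangle$ identically zero while $x$ is stationary at a minimizer of $e$. Differentiating once more gives $\langle \dot e,\dot e\rangle=\|k(\sigma(x),\cdot)-\mu_q\|^2_{\mathcal{H}}=0$. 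By the characteristic property of $k$ this forces $\delta_{\sigma(x)}=q$, which is impossible for a nondegenerate $q$. So the only invariant limit is the one where the forcing is balanced by motion, and there the mean-zero structure of $e=t(\mu_{\rho_{x,t}}-\mu_q)$ together with injectivity of the embedding gives $e=0$. I expect this invariance step, and the boundedness estimate in Step 3, to need the most care.
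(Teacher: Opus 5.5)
\paragraph{Step 4 cannot be completed.} The gap is in Step 4, and more care will not close it, because $e=0$ can never be reached. You use the identity $\|\dot e(t)\|_{\mathcal{H}}=\|k(\sigma(x(t)),\cdot)-\mu_q\|_{\mathcal{H}}$ only on a hypothetical invariant set, but it holds along every trajectory at every time. Moreover, for any non-Dirac $q$, continuity of $k$, compactness of $\Omega$ and the characteristic property give $d:=\min_{\omega\in\Omega}\|k(\omega,\cdot)-\mu_q\|_{\mathcal{H}}>0$. Since $\dot x$ is bounded, $t\mapsto\sigma(x(t))$ is Lipschitz; and $\omega\mapsto k(\omega,\cdot)$ is uniformly continuous into $\mathcal{H}$. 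Hence there is $\Delta>0$ with $\|\dot e(\tau)-\dot e(t)\|_{\mathcal{H}}\le d/2$ whenever $|\tau-t|\le\Delta$, and so
\[
\|e(t+\Delta)\|_{\mathcal{H}}+\|e(t)\|_{\mathcal{H}}\;\ge\;\Big\|\int_t^{t+\Delta}\dot e(\tau)\,d\tau\Big\|_{\mathcal{H}}\;\ge\;\Delta d-\tfrac{\Delta d}{2}\;=\;\tfrac{\Delta d}{2}\qquad\text{for all } t\ge 0.
\]
Therefore $\limsup_{t\to\infty}\|e(t)\|_{\mathcal{H}}\ge \Delta d/4>0$ under \emph{any} admissible control. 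In your own variables: if $e\to 0$, then eventually $\ddot V=\|\dot e\|_{\mathcal{H}}^2+\langle e,\ddot e\rangle_{\mathcal{H}}\ge d^2/2$, which is incompatible with $V\to 0$.

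So the conclusion $\tilde{\mathcal{E}}_{\text{MMD}}\to 0$, read literally, is false for every non-Dirac $q$. Your closing appeal to injectivity of the embedding is also circular: it turns $e=0$ into $\rho_{x,t}=q$, but it does not produce $e=0$. The paper's proof has the same skeleton as yours: the Lyapunov function $\|e\|_{\mathcal{H}}^2$, the control entering only at second order, and Barbalat. It fails at the same point. It signs only the control-dependent term $\langle e,\ddot e\rangle_{\mathcal{H}}$, uses Barbalat's lemma to get $\dot{\tilde{\mathcal{E}}}_{\text{MMD}}\to 0$, and then asserts that this ``necessitates'' $\tilde{\mathcal{E}}_{\text{MMD}}\to 0$. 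That is exactly the inference you flagged as the main obstacle.

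\paragraph{What could be proved, and why Step 3 does not yet deliver it.} Ergodicity only requires $\mathcal{E}_{\text{MMD}}(t)=\|e(t)\|_{\mathcal{H}}^2/t^2\to 0$, i.e.\ $\|e(t)\|_{\mathcal{H}}=o(t)$. For that target, the bound $\sup_t V(t)<\infty$ from your Step 3 would already suffice, and Step 4 could be dropped. Its LaSalle argument would in any case need $\dot V\le 0$ and a precompact orbit in the infinite-dimensional $\mathcal{H}$, and you have neither.

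However, Step 3 is a heuristic, not an estimate:
\begin{itemize}
\item The control does not appear in $\dot V=e_t(\sigma(x))-\int_\Omega e_t\,dq$. It enters only through $\langle e,\ddot e\rangle_{\mathcal{H}}=-\|g^\top\nabla_x e_t(\sigma(x))\|^2$. The bound $\ddot V\le\|\dot e\|_{\mathcal{H}}^2$ alone gives only $V=O(t^2)$, i.e.\ a bounded MMD, not decay.
\item A large $\|e\|_{\mathcal{H}}$ does not by itself give the large oscillation you invoke. The structure of $e_t$ only yields $\sup_\Omega e_t-\int_\Omega e_t\,dq\ge\|e\|_{\mathcal{H}}^2/t$. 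Also, a local gradient flow only seeks local minima of $e_t\circ\sigma$, which need not lie below the $q$-mean.
\item The consequence $\|g^\top\nabla_x e_t(\sigma(x))\|\to 0$ (i.e.\ $\dot x\to 0$) that you draw from Barbalat contradicts your own Step 4. If $\dot V\to 0$ and $\ddot V$ is uniformly continuous, Barbalat applied to $\dot V$ gives $\ddot V=\|\dot e\|_{\mathcal{H}}^2-\|g^\top\nabla_x e_t(\sigma(x))\|^2\to 0$. Then the gradient term stays close to $\|\dot e\|_{\mathcal{H}}\ge d$, and the state never comes to rest.
\end{itemize}
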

    \begin{proof}
        Substituting the control law into the closed-loop dynamics yields $\dot{x} = - g(x)^\top \int_\Omega e_\omega(t) \nabla_x k(\sigma(x(t)), \omega) \, d\omega$.
        Defining $\tilde{\mathcal{E}}(x,q,t+\Delta t) - \tilde{\mathcal{E}}(x, q, t)-2\Delta t \langle e(t), \dot{e}(t) \rangle_\mathcal{H} - \Delta t^2 \| \dot{e}(t) \|_\mathcal{H}^2 = \Delta \tilde{\mathcal{E}}$ for simplicity and substituting the control law into the second-order Taylor expansion of $\tilde{\mathcal{E}}$ yields
        \begin{align} \label{eq:second_order_expansion_closed_loop}
            \Delta \tilde{\mathcal{E}} & \approx \langle e(t), \ddot{e}(t) \rangle_{\mathcal{H}} \\
            & = \langle e(t), \nabla_x k(\sigma(x(t)), \cdot)^\top g(x) u(x,e,t) \rangle_{\mathcal{H}} \nonumber \\
            & = - \int_\Omega \int_\Omega \| g(x)^\top e_\omega(t) \nabla_x k(\sigma(x(t)), \omega)  \|^2  d\omega d\omega' \nonumber \\
            & \leq 0. \nonumber
        \end{align}
        Given $\mathcal{X}$ is compact and $k$ is smooth, $\dot{x}$, $e$, $\dot{e}$, are bounded and $\ddot{\tilde{\mathcal{E}}}_\text{MMD}$ is bounded and $\dot{\tilde{\mathcal{E}}}_\text{MMD}$ is uniformly continuous by inspection. By Barbalat's Lemma \cite{barbalat1959} and that~\eqref{eq:second_order_expansion_closed_loop} approximates the change in the ergodic MMD metric, $\lim_{t \to \infty} \dot{\tilde{\mathcal{E}}}_\text{MMD}(t) = 0$, which necessitates $\lim_{t \to \infty} \tilde{\mathcal{E}}_\text{MMD}(t) = 0$.
    \end{proof}
    The above result shows that the proposed steepest descent control law ensures that the trajectory distribution converges to the target distribution in the ergodic sense as time goes to infinity.
    In practice, the integral over $\Omega$ can be approximated through Monte-Carlo sampling as described in~\eqref{eq:ergodic_mmd_approx} and the control law can be implemented in a receding horizon fashion to achieve ergodic coverage over a finite time horizon.

\section{Results}
\label{sec:results}

\subsection{Robustness of Ergodic Convergence to Planning Horizon}

\begin{figure}[t!]
    \centering
        \includegraphics[clip, trim={0.5em 0.5em 0em 1.8em}, width=\linewidth]{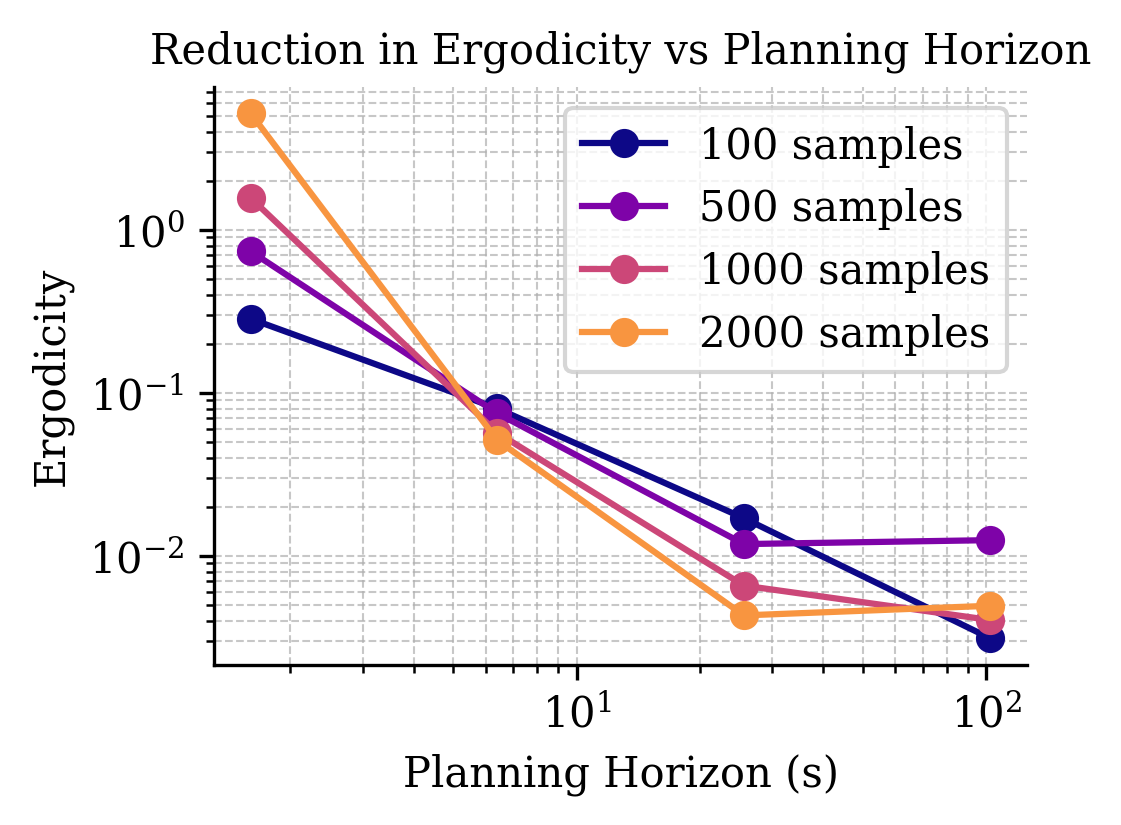}
    \vspace{-20pt}
    \caption{\textbf{Reduction of Ergodicity vs. Planning Horizon}. Here, we illustrate the impact that the receding-horizon formulation in~\eqref{eq:plannig_opt} has on the reduction of ergodicity over the planning horizon for a single-integration dynamical system. As the planning horizon increases, so does the reduction in ergodicity. Note that for large samples, numerical precision becomes a limited factor which motivates the use of the extended error visitation state which is a direct tradeoff between planning controls versus reactive control. }
    \vspace{-15pt}
    \label{fig:convergence}
\end{figure}

\begin{figure}[t!]
    \centering
    \vspace{5pt} 
    \includegraphics[width=\linewidth]{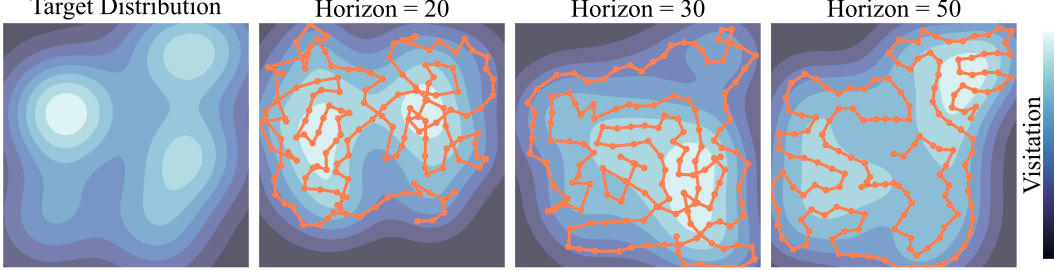}
    \vspace{-15pt}
    \caption{\textbf{Impact of MPC Planning Horizon on Short-Term Ergodic Convergence.} 
    Here, we show a qualitative illustration of the impact of control planning horizon on the effective coverage trajectory. As the time-horizon increases, the trajectory tries to span more of domain proportional to the target distribution. However, as planning time becomes longer, so does the complexity of the trajectory and control, leading to numerical instabilities. 
    }
    \label{fig:hor_ablation}
    \vspace{-10pt}
\end{figure}

\subsubsection{Problem Formulation}
Consider the receding-horizon optimization problem defined in \eqref{eq:plannig_opt} over a finite horizon $N \in \mathbb{Z}_+$.
We aim to characterize the sensitivity of the ergodic metric $\mathcal{E}(t)$ to variations in $N$ for a single-integrator system operating on a compact domain $\Omega \in \mathbb{R}^n$. 
The objective is to verify that the control law \eqref{eq:ergodic_ctrl_law} preserves asymptotic convergence to the target distribution $q$ in arbitrary planning horizons. 
We define the mission duration as $T>0$ and evaluate the resultant approximated visitation distribution $\mu_{\rho_{x,t}}$ against the target $\mu_q$.

\subsubsection{Results}
The influence of the planning horizon $N$ on the convergence rate of ergodicity is shown qualitatively in Fig. \ref{fig:hor_ablation} and quantitatively in Fig. \ref{fig:convergence}. 
We prescribe a target distribution on a bounded domain $\Omega = [-1, 1]^2$ and evaluate trajectories for $N\in\{30, 60, 100\}$ with a fixed mission length $T=150$.

As $N$ increases, so too does the optimizer's ability to account for global information density, which results in a trajectory that more efficiently balances the repulsion from the current state $x(t)$ with the attraction to high-probability regions in $\mu_q$. 
This behavior is consistent with the performance characteristics of traditional model predictive control in constrained regulation tasks. 

To establish a lower bound of robustness, we analyze the degenerate case where $N=1$. 
Figures \ref{fig:time_series} \& \ref{fig:one_horizon} demonstrate that even with minimal lookahead, the control law in \eqref{eq:ergodic_ctrl_law} successfully drives the agent's visitation towards proportionality to $\mu_q$. 
These results confirm that the proposed ergodic control law does not rely on large planning horizons to guarantee ergodicity, as the visitation embedding within \eqref{eq:ergodic_ctrl_law} ensures that the agent's future visitation remains proportionally consistent with $\mu_q$ for all $N\ge1$.



\begin{figure}[h]
    \centering
        \includegraphics[clip, trim={1em 0.5em 0 0}, width=\linewidth]{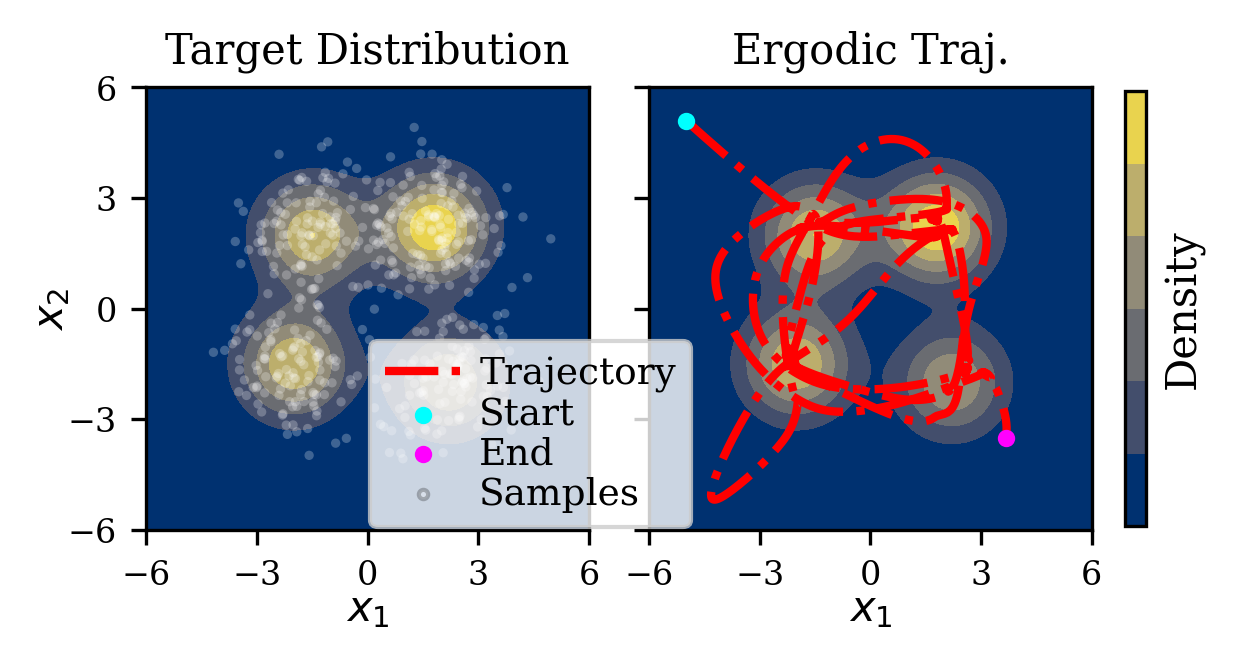}
    \vspace{-18pt}
    \caption{\textbf{Illustration of Infinite-Horizon Feedback Control.} (Left) A target reference distribution $q$ on $\Omega$ with samples $\{ \omega_i \}_{i=0}^M \sim q$. (Right) An ergodic trajectory from a single-integrator dynamical system following the proposed control law~\eqref{eq:ergodic_ctrl_law}.
    }
    \label{fig:one_horizon}
\end{figure}

\subsection{Parameter Sensitivity \& Physical Consistency}

\subsubsection{Problem Formulation}
The performance of the proposed ergodic controller is intrinsically coupled to the spectral properties of the visitation embedding's kernel function $k(\cdot, \cdot)$ and its associated length scale $h\in \mathbb{R}_{>0}$. 
These parameters define the Reproducing Kernel Hilbert Space in which visitation data is stored. 
We evaluate the sensitivity of the resulting trajectory $x(t)$ to variations in the visitation embedding's length scale $h$ and the choice of kernel definition (e.g., RBF). 
We seek to characterize the qualitative balance between local visitation resolution and global regularity of the resulting control law.
Simulations are performed on a manifold represented by samples from the Stanford Dragon mesh, normalized to a compact domain $\Omega = [-0.5, 0.5]^3$, with a fixed planning horizon $N=100$ and mission duration $T=500$. 

\subsubsection{Results}
The influence of the length scale $h$ on the visitation density and qualitative trajectory behavior is given in Fig. \ref{fig:ablation_stack}. 
The length scale parameterizes the strength of the repulsive forces generated by the visitation KME. 
As $h\to0$, the visitation KME approaches a summation of Dirac measures that provide a discontinuous and highly-localized representation of past visitation.
This behavior results in myopic trajectories that frequently revisit the same local neighborhoods due to prematurely vanishing gradients outside of a small radius of $x(t)$.

As the length scale increases, the repulsive force from the visitation KME expands to facilitate broader exploration. 
However, as shown in Fig. \ref{fig:ablation_stack} (Bottom), an excessively large $h$ overestimates the effective coverage produced by $x(t)$. 
When these repulsive forces dominate the MMD's attraction to high-value regions, trajectories prematurely diverge from the manifold boundary.

We validate that the proposed method is adaptable to non-Gaussian kernel functions by comparing tuned Gaussian, Laplace, and Matern Kernels (shown in Fig. \ref{fig:kernel_ablation}). 
While each kernel possesses distinct tail behaviors and smoothness properties, the fundamental ergodic behavior of each system is preserved when their respective length scales are tuned. 
This result reinforces the assertion in \cite{gretton2012} that the MMD metric is compatible with any valid Mercer kernel, which confirms that the proposed objective yields trajectories consistent with $\mu_q$ provided the kernel bandwidth is appropriately scaled to the domain $\Omega$.




\begin{figure}[t!]
    \centering
    \vspace{7pt} 
    \includegraphics[width=\linewidth]{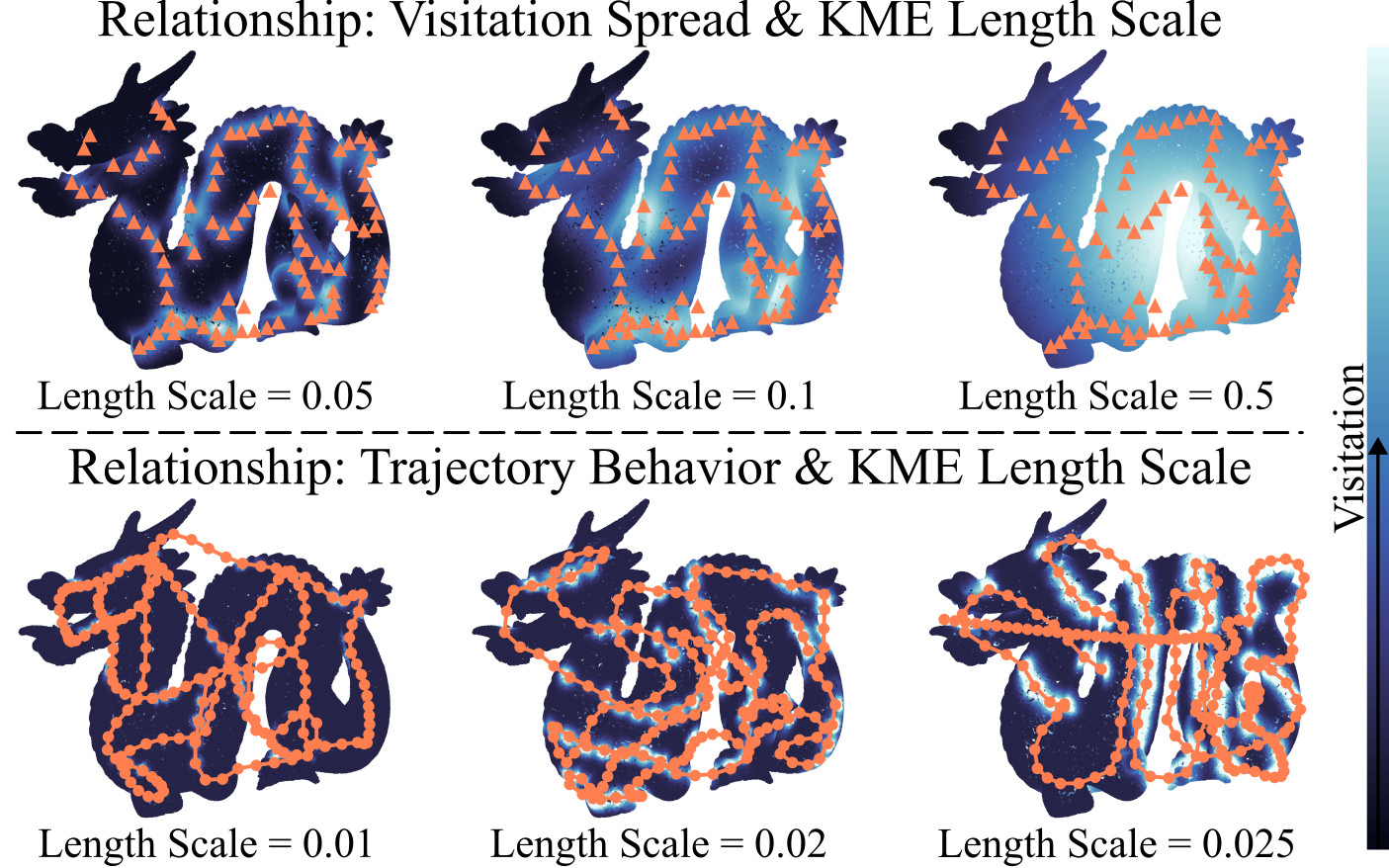}
    \vspace{-20pt}
    \caption{\textbf{Kernel Parameters on Visitation Density \& Trajectory Behavior.} 
    We evaluate the sensitivity of the kernel mean embedding (KME) and ergodic controller on the Stanford Dragon mesh with a uniform target distribution. \textbf{(Top)} Impact of KME length scale on spatial spread of visitation history. Increasing the length scale yields a coarser, more expansive representation of past coverage. \textbf{(Bottom)} Influence of length scale selection on receding-horizon trajectories. 
    As the length scale increases, so does the separation of the trajectory between itself and the mesh. 
    }
    \vspace{-10pt}
    \label{fig:ablation_stack}
\end{figure}

\begin{figure}[t!]
    \centering
    \includegraphics[width=\linewidth]{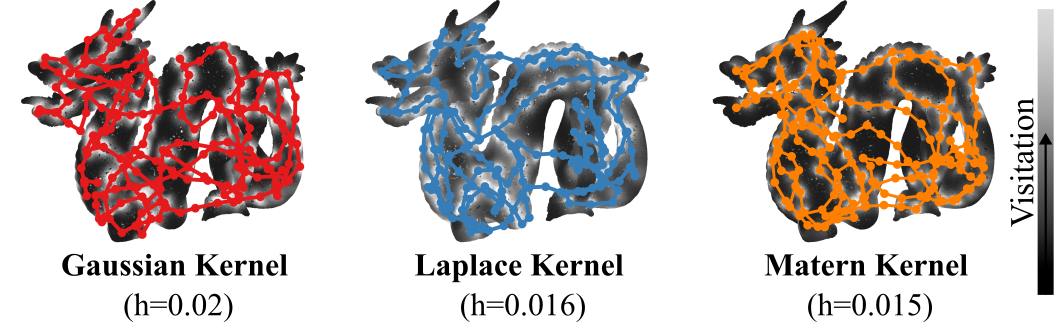}
    \vspace{-20pt}
    \caption{\textbf{Ergodic Control Trajectories Using Different Kernel Definitions.} 
    We demonstrate that our controller can adapt to alternative kernel definitions on the Stanford Dragon mesh. While each kernel possesses subtly different behaviors and smoothness properties, all three successfully drive trajectories toward uniform coverage when their respective length scales are tuned. 
    }
    \vspace{-15pt}
    \label{fig:kernel_ablation}
\end{figure}

\subsection{Computational Complexity \& Long-Horizon Scalability}
\label{sec:time_scaling}

\subsubsection{Problem Formulation}
In traditional maximum mean discrepancy formulations, the evaluation of the repulsive gradient requires an $O(t)$ summation over all past states, which results in an $O(t^2)$ cumulative complexity for a mission of duration $T$. 
We evaluate the computational scaling of the proposed KME formulation against three representative ergodic baselines:
\begin{itemize}
    \item \textbf{KME}, our method derived from kernel mean embeddings with error visitation dynamics~\eqref{eq:ivp_error_state},
    \item \textbf{Full-History}, which retains all past visited trajectory history,
    \item \textbf{Short-Term Memory}, which retains the trajectory history past for $K$ discrete-points. 
    \item \textbf{Subsampled Memory}, which uniformly samples $K$ points from the full visitation history.
\end{itemize}
Benchmarks are conducted on a manifold $\mathcal{M} \subset \mathbb{R}^3$ defined by the Stanford Bunny mesh ($M=1000$ spatial samples) over 10 independent trials. 

\subsubsection{Results}
The computational complexity relative to the discrete mission length $T\in\mathbb{Z}_+$ is shown in Fig. \ref{fig:time_scaling} (Left). 
While standard EMMD formulations theoretically exhibit quadratic growth in cumulative cost, practical implementations often suffer from super-polynomial scaling due to the increasing overhead of kernel matrix operations and memory bandwidth constraints. 

In contrast, our proposed planner demonstrates computational invariance to mission length, where the per-step computational cost scales $O(1)$ for a fixed domain and planning horizon. 
For the given mesh resolution, the crossover point, where the proposed method is computationally superior to baselines, occurs at $t\approx30$ iterations. 
Although memory-limited baselines eventually achieve constant-time scaling once they saturate their visitation memory, they necessitate an explicit loss of historical visitation. 
However, the KME-based planner preserves an implicit infinite-horizon representation of $\hat{\mu}_{\rho_{x,t}}$ without increasing the dimensionality of the optimization problem. 

Figure \ref{fig:time_scaling} (Middle and Right) confirms that the proposed method preserves traditional EMMD scaling laws with respect to the number of spatial samples $M$ and the planning horizon $N$. 
Specifically, the $O(1)$ scaling with respect to mission length does not introduce significant overhead in other planning dimensions. 
This validates that the proposed method is uniquely suited for long-horizon autonomous exploration and the respective control law remains real-time tractable over arbitrarily-large mission lengths while maintaining a consistent memory of the visitation distribution.

\subsection{Exploration Performance \& Coverage Quality}

\begin{figure}[t]
    \vspace{10pt}
    \centering
        \includegraphics[clip, trim={1em 1.25em 0em 1em}, width=0.9\linewidth]{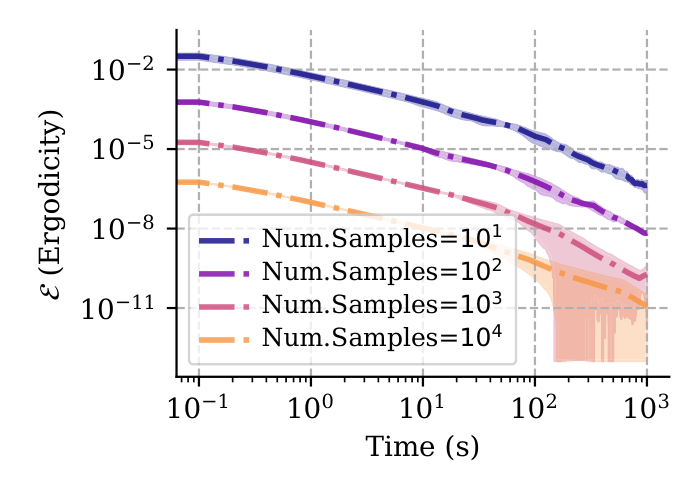}
    \vspace{-10pt}
    \caption{\textbf{Convergence to Ergodicity}. Here, we demonstrate the convergence property of the ergodic controller~\eqref{eq:ergodic_ctrl_law} derived in Theorem~\ref{thm:convergence} for the single-integrator dynamics for varying sample-size approximation of the integral. Shown is the mean and standard error ergodicity for 5 random seeds. Samples are drawn directly from the target distribution $q$ described in Fig.~\ref{fig:one_horizon}. Monotonically decreasing ergodicity and ergodic behavior induced by the controller~\eqref{eq:ergodic_ctrl_law} is observed even in low-sample settings. }
    \label{fig:convergence}
\end{figure}

\subsubsection{Problem Formulation}
Finally, we evaluate the coverage quality of the KME-based planner against several baselines, including the baselines discussed in Sec. \ref{sec:results}.C, a Traveling Salesperson (TSP) heuristic, and a Next-Best View (NBV) greedy planner. 
All solvers are evaluated on a manifold represented by the Stanford Bunny mesh $\Omega=[-0.5, 0.5]^3$ ($M=1000$) with a fixed mission duration $T=500$, planning horizon $N=150$, and a maximum control effort constraint $\lVert\Delta x_{\text{max}}\rVert=0.05$ per step. 
To ensure a rigorous comparison under equivalent computational complexity against the ergodic baselines, the K-limited baselines are constrained to $K=30$ points, matching the per-iteration overhead of the proposed method. 

\begin{figure}[t!]
    \centering
    \vspace{5pt} 
    \includegraphics[width=\linewidth]{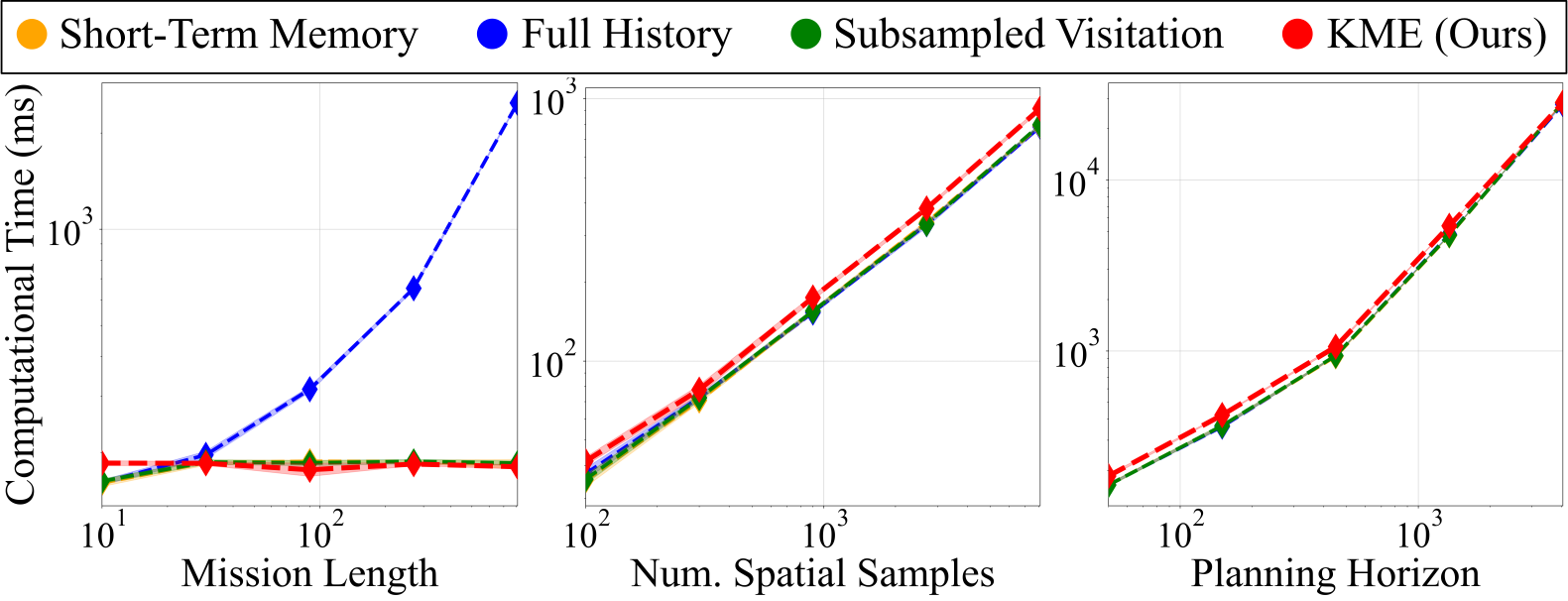}
    \vspace{-15pt}
    \caption{\textbf{Computational Scaling of Infinite-Horizon Ergodic Control.} We compare the computational scaling of the proposed approach against ergodic baselines over 10 independent trials on the Stanford bunny mesh with single-integrator dynamics. \textbf{(Left)} Recording all past visitations causes the ergodic controller to have exponential growth in compute compared to fixed-length variations. 
    \textbf{(Middle)} All methods are shown to scale linearly with the number of spatial samples $M$ needed to compute the kernel mean embedding. \textbf{(Right)} Likewise, our approach retains the computational scaling if used in a receding-horizon formulation in~\eqref{eq:plannig_opt} which only scales to the discrete planning time. 
    }
    \label{fig:time_scaling} 
    \vspace{-15pt}
\end{figure} 

\begin{figure}[b!]
    \centering
    \vspace{5pt} 
    \includegraphics[width=\linewidth]{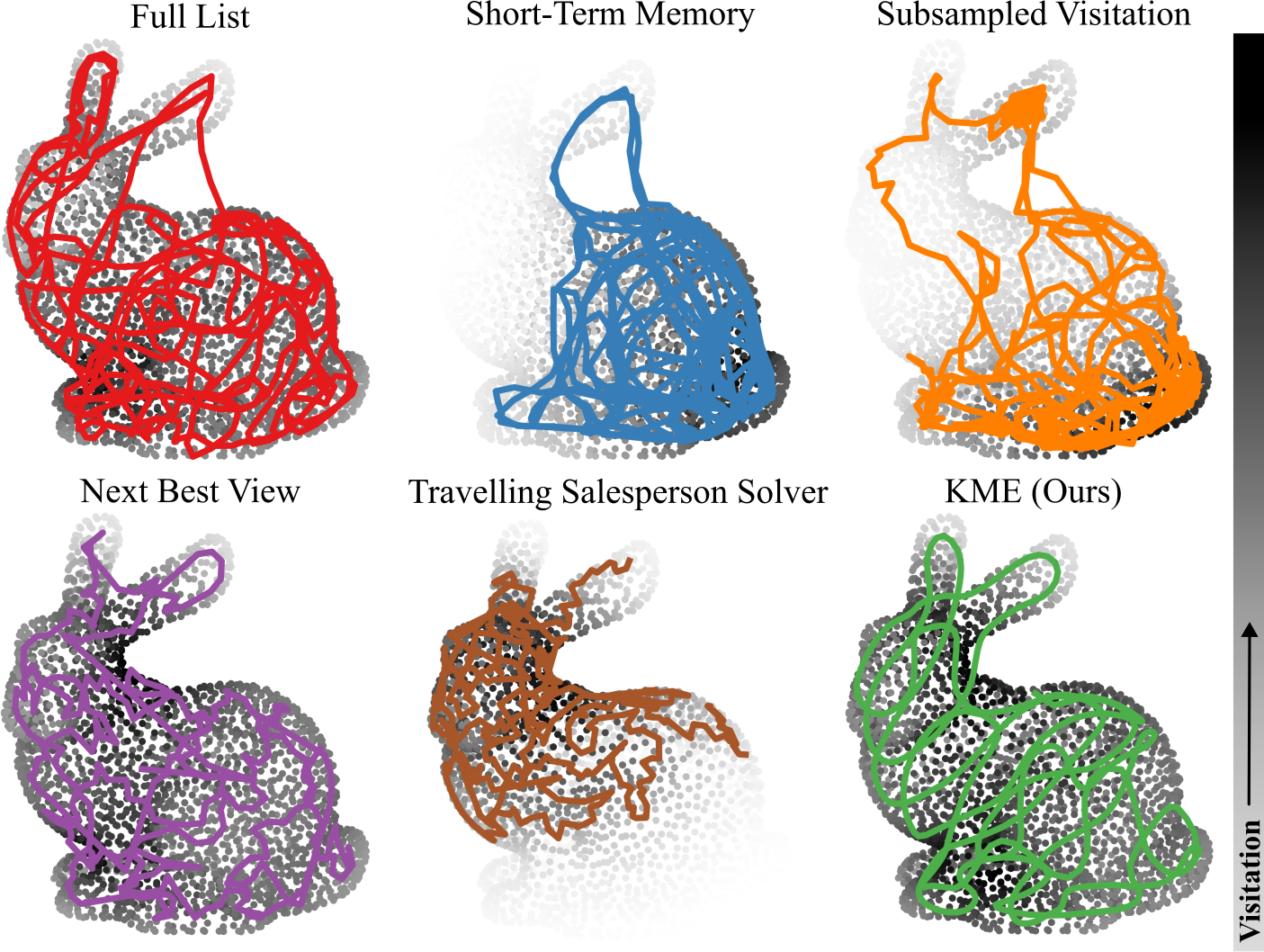}
    \vspace{-21pt}
    \caption{\textbf{Qualitative Coverage Performance Against Baselines.} Trajectory snapshots are taken at the time stamp where our KME-based method first achieves 100\% coverage of the Stanford Bunny Mesh. Our KME approach achieves coverage quality nearly identical to the computationally expensive full-memory EMMD, which validates the fidelity of the kernel mean embedding's visitation. When constrained to the same computational budget per control cycle ($K\approx30$ points), the short-term and sampled memory baselines achieve only $\approx50\%$ coverage, while the TSP solver reaches only $\approx40\%$. While a more traditional method like next-best view eventually achieves full coverage, they lack proportional coverage guarantees and fall short in trajectory smoothness.}
    \label{fig:compare_3d_qual}
\end{figure} 

\subsubsection{Results}
The qualitative and quantitative exploration results are presented in Figs. \ref{fig:compare_3d_qual} and \ref{fig:comp_quant}, respectively. 
The recursive KME-based planner achieves the highest rate of convergence to $\mu_q$. 
As shown in Fig. \ref{fig:comp_quant}, K-limited baselines exhibit significant performance degradation due to the loss of historical visitation information. 
Specifically, as the memory buffer saturates, the short-term memory approach loses global context for its exploration, which causes the control law to converge to local limit cycles. 
While the subsampled baseline provides a broader temporal window, the stochastic nature of the sampling introduces variance in the repulsive gradient, which leads to suboptimal coverage trajectories. 

In contrast, the KME-based planner maintains a high-fidelity representation of the full-horizon visitation history without the $O((t_h+t_p)^2)$ cumulative complexity of traditional EMMD. 
A key advantage of the proposed method is the ability to decouple the visitation embedding's kernel and length scale from the MMD objective's kernel. 
By prescribing a slightly larger bandwidth ($h=0.02$) for the visitation KME, we can induce a more expansive repulsive field.
This allows the recursive planner to outperform even the full-history EMMD baseline in coverage speed, as shown in Fig. \ref{fig:comp_quant}. 

Furthermore, while the NBV planner achieves competitive coverage, it lacks the formal distribution-matching guarantees inherent to ergodic control, which results in less proportionate visitation across the domain. 
The TSP-based heuristic demonstrates the lowest convergence rate, as it lacks a global incentive for uniform coverage. 
These results confirm that the proposed KME-based method provides a balance between infinite-horizon memory fidelity, computational invariance to mission length, and tunable exploration dynamics for long-horizon autonomous missions.

\begin{figure}[t]
    \centering
    \includegraphics[width=\linewidth]{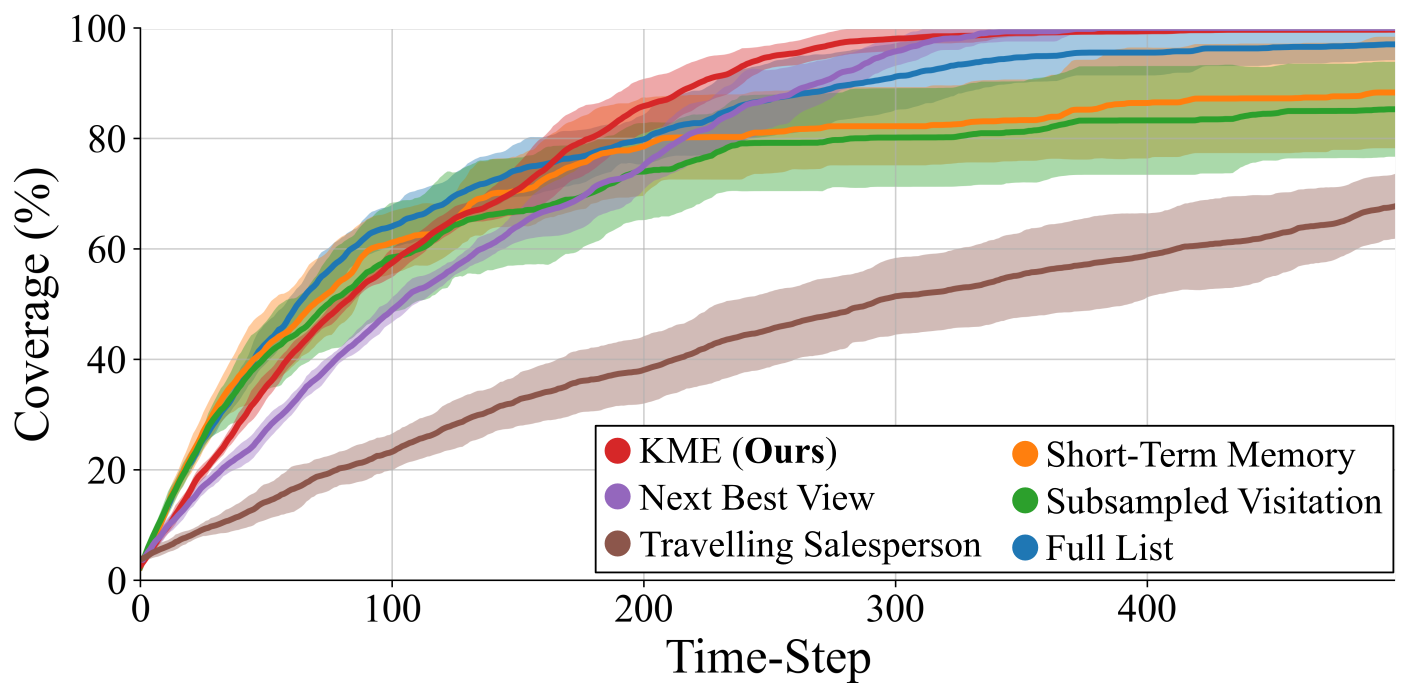}
    \vspace{-16pt}
    \caption{\textbf{Quantitative Coverage Rate.} We compare coverage performance across 10 independent trials with separate initial positions placed on random seeded points in the Stanford Bunny mesh. The proposed KME-based controller achieves the highest coverage at the fastest rate. 
    Short-term memory methods fall into limit cycles and repeatedly traverse the same paths once previous visitation data is discarded. Subsampled memory exhibits high variance due to the stochastic nature of history representation. Next Best View provides competitive coverage but is slower at full coverage. }
    \label{fig:comp_quant}
    \vspace{-10pt}
\end{figure}


\section{Conclusion}
\label{sec:conclusion}

In this paper, we present a computationally-bounded MPC objective for ergodic trajectory planning in complex, non-Euclidean domains. 
By leveraging the recursive properties of kernel mean embeddings (KMEs), we reformulate the ergodic maximum mean discrepancy (EMMD) objective to decouple an agent's visitation history from the trajectory optimization process. 
This formulation ensures that the per-step computational complexity remains temporally invariant ($O(1)$) with respect to the mission duration, allowing infinite-horizon exploration tasks without the memory overhead inherent to traditional implementations. 

We establish the theoretical foundation of this approach by deriving a proof of asymptotic convergence for the recursive objective to ensure that the proposed method preserves the proportional visitation guarantees of ergodic theory. 
Numerical validation across two- and three-dimensional manifolds demonstrates that our approach maintains high-frequency control performance and superior coverage fidelity over extended mission durations by significantly outperforming memory-intensive baselines in both computational scaling and coverage quality. 
Future work will investigate the extension of this recursive KME-based planner to time-varying domains and flow-field dynamics to facilitate robust ergodic planning in non-stationary environments. 



\bibliographystyle{IEEEtran}
\bibliography{references}



\end{document}